\documentclass{article}
\pdfoutput=1

\usepackage{microtype}
\usepackage{graphicx}
\usepackage{subcaption}
\usepackage{booktabs} % for professional tables

\usepackage{hyperref}

\usepackage[accepted]{icml2026}

\usepackage{amsmath}
\usepackage{amssymb}
\usepackage{mathtools}
\usepackage{amsthm}
\usepackage{enumitem}
\usepackage{algorithm}
\usepackage{algorithmic}

\DeclareMathOperator*{\argmax}{arg\,max}

\usepackage[capitalize,noabbrev]{cleveref}

\theoremstyle{plain}
\newtheorem{theorem}{Theorem}[section]
\newtheorem{proposition}[theorem]{Proposition}

\newtheorem{corollary}[theorem]{Corollary}
\theoremstyle{definition}
\newtheorem{definition}[theorem]{Definition}

\theoremstyle{remark}

\usepackage[textsize=tiny]{todonotes}

\icmltitlerunning{Local Redundancy: An Information-Theoretic Measure of Plasticity from Synthetic Memorization}

\begin{document}

\twocolumn[
  \icmltitle{Local Redundancy: An Information-Theoretic Measure of Plasticity from Synthetic Memorization}

% It is OKAY to include author information, even for blind
% submissions: the style file will automatically remove it for you
% unless you've provided the [accepted] option to the icml2026
% package.

% List of affiliations: The first argument should be a (short)
% identifier you will use later to specify author affiliations
% Academic affiliations should list Department, University, City, Region, Country
% Industry affiliations should list Company, City, Region, Country

\begin{icmlauthorlist}
\icmlauthor{Jiaxuan Cheng}{mit}
\end{icmlauthorlist}

\icmlaffiliation{mit}{Massachusetts Institute of Technology, Cambridge, MA, USA}

\icmlcorrespondingauthor{Jiaxuan Cheng}{jasonc75@mit.edu}

% You may provide any keywords that you
% find helpful for describing your paper; these are used to populate
% the "keywords" metadata in the PDF but will not be shown in the document
\icmlkeywords{Machine Learning, Deep Learning, Neural Networks, Optimization}

\vskip 0.3in
]

% this must go after the closing bracket ] following \twocolumn[ ...

% This command actually creates the footnote in the first column
% listing the affiliations and the copyright notice.
% The command takes one argument, which is text to display at the start of the footnote.
% The \icmlEqualContribution command is standard text for equal contribution.
% Remove it (just {}) if you do not need this facility.

%\printAffiliationsAndNotice{}  % leave blank if no need to mention equal contribution
\printAffiliationsAndNotice{} % single author, no equal contribution

%%%%%%%%%%%%%%%%%%%%%%%%%%%%%%%%
% ABSTRACT
%%%%%%%%%%%%%%%%%%%%%%%%%%%%%%%%
\begin{abstract}
Plasticity---a neural network's ability to adapt to new tasks---is critical for continual and transfer learning. Existing measures, such as effective rank, dead neuron fraction, and weight norm, lack theoretical grounding and correlate poorly with performance on new tasks. We introduce \emph{local redundancy}, an information-theoretic measure derived from universal compression theory. We define local redundancy as the worst-case redundancy of a local model family---parameters in an infinitesimal neighborhood along gradient directions---and show this is a principled measure of plasticity. Although local redundancy is intractable to compute exactly, we prove that the expected squared gradient norm on a synthetic memorization task provides an efficiently computable lower bound. Experiments on continual image classification and time series transfer learning demonstrate that local redundancy predicts downstream performance better than existing measures and enables pretraining checkpoint selection where validation loss plateaus.
\end{abstract}

%%%%%%%%%%%%%%%%%%%%%%%%%%%%%%%%
% MAIN CONTENT
%%%%%%%%%%%%%%%%%%%%%%%%%%%%%%%%
\begin{figure*}[t]
    \centering
    \includegraphics[width=\textwidth]{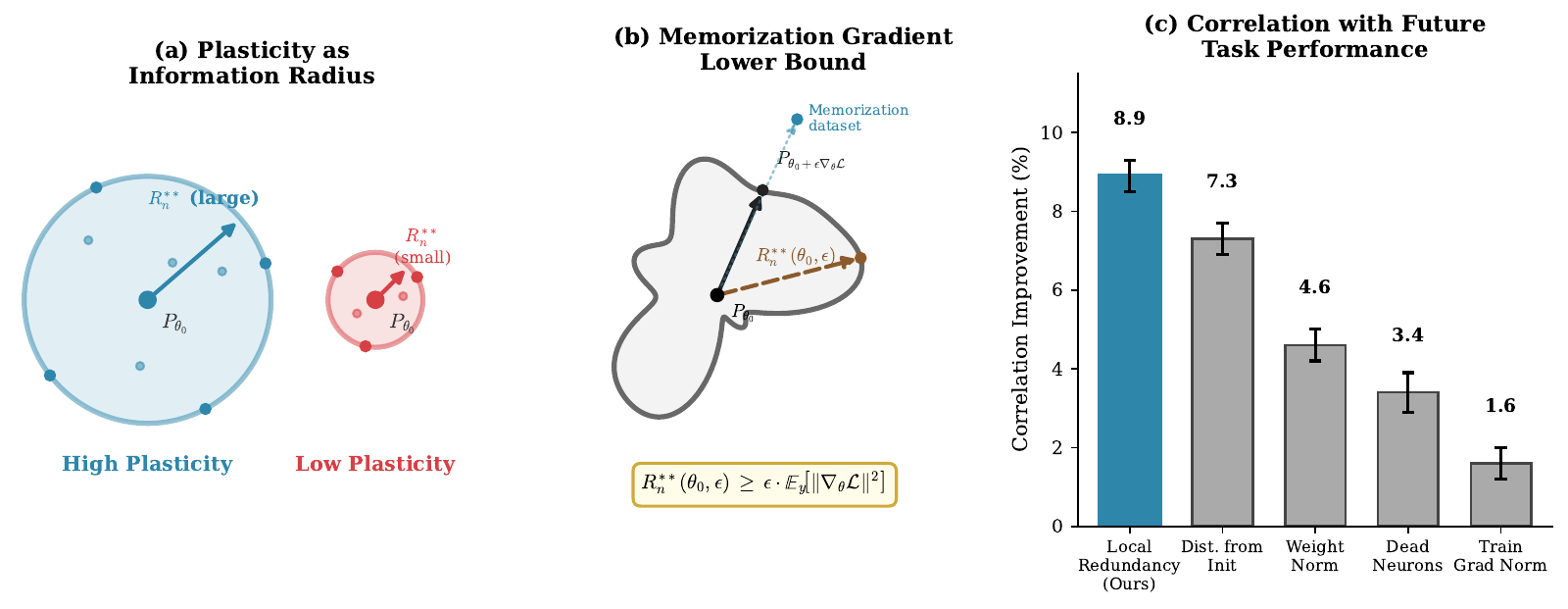}
    \caption{\textbf{Local redundancy: a principled, computable measure of plasticity.} (a) Redundancy measures the information radius of a model class: a plastic network can reach many distributions (large radius); a rigid network cannot (small radius). (b) The expected squared gradient norm on synthetic memorization data lower-bounds local redundancy (Theorem~\ref{thm:gradient}), requiring only a single backward pass to compute. (c) Local redundancy predicts downstream performance better than existing plasticity heuristics in continual learning; see Section~\ref{sec:experiments} for transfer learning.}
    \label{fig:overview}
\end{figure*}

\section{Introduction}
\label{sec:introduction}

Neural networks trained over long horizons or across multiple tasks often lose the ability to learn effectively, a phenomenon termed \textit{loss of plasticity} \citep{lyle2023understanding, dohare2024loss}. Modern training pipelines increasingly involve multiple adaptation phases such as fine-tuning, alignment, and continual updates, where plasticity loss in early stages compromises learning in later ones. Understanding and measuring plasticity is therefore central to training neural networks that learn continuously.

Prior work has identified several properties of neural networks that change as plasticity degrades: lower effective rank of representations \citep{kumar2020implicit}, higher dormant neuron ratio \citep{sokar2023dormant}, larger weight norms \citep{lyle2023understanding}, and greater distance from initialization. These quantities are often used as proxies for plasticity, but each captures only one facet of network degradation without a unifying principle. In practice, we find they correlate poorly with future task performance.

We introduce \textit{local redundancy}, a measure of plasticity derived from universal compression theory. The redundancy of a model class measures the excess loss incurred by any predictor that must handle all distributions in the class \citep{shtarkov1987universal, rissanen1984universal}; we define local redundancy as the worst-case redundancy of an infinitesimal neighborhood of parameters along gradient directions. Classically (and in singular model families appearing in deep learning), worst-case redundancy is asymptotically equal in leading term to the information radius of the model class—the diversity of distributions the model can express \citep{haussler1997mutual, watanabe2009algebraic}. This measures the capacity of the network to fit distributions different from its training data (Figure~\ref{fig:overview}).

We prove that the expected squared gradient norm on synthetic memorization data provides a lower bound on local redundancy (Theorem~\ref{thm:gradient}). The proof connects memorization gain—the reduction in loss when fitting random labels—to worst-case redundancy via the Shtarkov sum. The memorization dataset consists of synthetic images paired with randomly sampled labels, with inputs designed to saturate the network's memorization capacity. We show that lower bounding redundancy via memorization yields smooth training curves, and thus the synthetic gradient norm acts as both a lower bound and a proxy to estimate local redundancy.

We evaluate local redundancy in two settings and find it outperforms existing proxies. In continual image classification over thousands of sequential tasks, local redundancy correlates higher with future task accuracy than effective rank, dormant neuron ratio, and weight norm, after accounting for task number. In time series transfer learning, selecting pretrain checkpoints with maximum local redundancy achieves better adaptation performance than selecting the checkpoint with lowest validation loss (Table~\ref{tab:timeseries}), something not achievable with other plasticity metrics.

Our contributions are as follows:
\begin{itemize}
    \item We define local redundancy, an information-theoretic measure of plasticity based on the worst-case redundancy of a network's local parameter neighborhood.
    \item We prove that the expected squared gradient norm on synthetic memorization data lower-bounds local redundancy, computable in a single backward pass.
    \item We show that local redundancy predicts future task performance better than existing proxies and enables checkpoint selection when validation loss plateaus.
\end{itemize}
\section{Related Work}
\label{sec:related}

\textbf{Information theory and universal compression.}
Our framework builds on information-theoretic approaches to learning, specifically the theory of universal compression \citep{rissanen1984universal, barron1998minimum}. The minimax redundancy characterizes the unavoidable cost of coding without knowledge of the source \citep{davisson1973universal, shtarkov1987universal}, is equivalent to the capacity of the channel induced by the model family \citep{haussler1997mutual, merhav1998universal}. For classical regular models, redundancy scales as $\frac{d}{2} \log n$ where $d$ is the parameter dimension \citep{rissanen1996fisher}. For singular models such as neural networks, singular learning theory generalizes this result and replaces the effective dimension with the real log canonical threshold (RLCT) \citep{watanabe2009algebraic}. This has recently been applied to study neural network learning dynamics \citep{hoogland2024developmental, chen2023dynamical} and information-theoretic generalization bounds based on mutual information \citep{xu2017information, steinke2020reasoning}. We apply ideas from universal compression theory to neural networks, using redundancy as a measure of plasticity.

\textbf{Model complexity.}
Classical model complexity measures include VC dimension \citep{vapnik1971uniform} and Rademacher complexity \citep{bartlett2002rademacher}, which characterize global worst-case learnability but are difficult to compute for deep networks and insensitive to the current parameter configuration. \citet{wang2025exactly} recently obtain tight bounds via Jensen-Shannon divergence. \citet{munn2025bayesian} use MDL-based model selection to choose pretraining checkpoints among pretrain runs with differing hyperparameters or model families. In contrast, local redundancy measures capacity at a specific parameter configuration and enables checkpoint selection within a single run.

\textbf{Plasticity and continual learning.}
Loss of plasticity, where networks progressively lose the ability to learn new tasks, has been observed across reinforcement learning \citep{lyle2023understanding, dohare2024loss, abbas2023loss, nikishin2022primacy} and supervised settings \citep{ash2020warm, achille2019critical}. Proposed causes include rank collapse of representations \citep{kumar2020implicit}, dormant neurons \citep{sokar2023dormant}, saturated activations, growing weight norms \citep{lyle2023understanding}, and drift from initialization. The Fisher information matrix and Hessian have also been used to measure plasticity through the curvature of the loss landscape \citep{kirkpatrick2017overcoming, martens2020new, lecun1989optimal}. These measure local curvature with respect to the current training distribution; in contrast, local redundancy bounds worst-case performance over all possible target distributions.

Several solutions to retain plasticity have also been proposed: shrink-and-perturb \citep{ash2020warm}, continual backprop \citep{dohare2024loss}, regenerative regularization \citep{kumar2023maintaining}, and periodic resets \citep{nikishin2022primacy}. Separately, continual learning methods address catastrophic forgetting via replay \citep{rolnick2019experience}, regularization such as EWC \citep{kirkpatrick2017overcoming}, or architectural isolation \citep{rusu2016progressive}. Our contribution is orthogonal: rather than intervening on plasticity, we provide a principled measure grounded in universal compression theory that quantifies plasticity, predicting downstream performance better than existing metrics.
\section{Local Redundancy as Plasticity}
\label{sec:theory}

We formalize plasticity as the capacity of a network to learn new tasks from its current state. Our framework builds on universal compression theory, which provides a principled measure of model class complexity called \emph{redundancy}.

\subsection{Redundancy}
\label{sec:redundancy}

Let $(X, Y) \sim P_{XY}$ with inputs $X \in \mathcal{X}$ and discrete labels $Y \in \mathcal{Y}$. Given $n$ samples $(x_i, y_i)$ drawn i.i.d.\ from $P_{XY}$, we model the conditional distribution using a parametric class $\{P_{Y|X,\theta} : \theta \in \Theta\}$. A \emph{predictor} $Q_{Y^n|X^n}$ is any conditional distribution over label sequences.

The \emph{worst-case redundancy} measures the excess log-loss of the best predictor relative to an oracle that knows the true parameter:
\begin{equation}
\label{eq:worst-redundancy}
R_n^{**}(\Theta) \triangleq \min_{Q} \max_{x^n, y^n} \sup_{\theta \in \Theta} \log \frac{P_{Y^n|X^n,\theta}(y^n|x^n)}{Q(y^n|x^n)}.
\end{equation}
This minimax redundancy equals $R_n^{**}(\Theta) = \sup_{x^n} \log Z(x^n)$, where the \emph{Shtarkov sum}
\begin{equation}
\label{eq:shtarkov}
Z(x^n) = \sum_{y^n \in \mathcal{Y}^n} \sup_{\theta \in \Theta} P_\theta(y^n|x^n)
\end{equation}
sums the best-fit likelihood over all possible labelings \citep{shtarkov1987universal}. Redundancy quantifies model class complexity: richer families incur higher redundancy because the predictor must hedge across more possible models.

\subsection{Local Model Families}
\label{sec:local}

Global redundancy $R_n^{**}(\Theta)$ characterizes the entire model class and depends only on architecture. But plasticity is inherently \emph{local}: it measures what a network can learn from its \emph{current parameters} $\theta_0$---and hence its current input--output map---not what the architecture could learn in principle. We stress that ``local'' here refers to this neighborhood in parameter space, not to local minima or local curvature of the loss. A network with high global capacity but collapsed representations has low plasticity.

To capture this, we define the \emph{local model family} as the set of distributions reachable within an infinitesimal distance along gradient directions:

\begin{definition}[Local model family]
\label{def:local-family}
Fix $\theta_0 \in \Theta$ and step size bound $\epsilon > 0$. Let $L(\theta; x^n, y^n) = -\log P_{Y^n|X^n,\theta}(y^n|x^n)$. The \emph{local parameter set} is
\begin{align}
\label{eq:local-params}
\Theta(\theta_0, \epsilon) \triangleq \bigl\{\theta_0 - \eta \nabla_\theta L(\theta_0; x^n, y^n) :\ &x^n \in \mathcal{X}^n, y^n \in \mathcal{Y}^n, \nonumber\\[-3pt]
&\eta \in [0, \epsilon]\bigr\}.
\end{align}
The \emph{local model family} is $\mathcal{M}(\theta_0, \epsilon) = \{P_{Y^n|X^n,\theta} : \theta \in \Theta(\theta_0, \epsilon)\}$.
\end{definition}

This captures all parameters reachable by one gradient step on \emph{any} dataset $(x^n, y^n)$ with step size at most $\epsilon$. The gradient-based definition is more principled than a Euclidean ball $\{\theta : \|\theta - \theta_0\| < \epsilon\}$: perturbations in different directions have vastly different effects on predictions, and the gradient naturally weights directions by their influence on the loss \citep{zhang2019all, frankle2020lottery}.

The \emph{local redundancy} is the worst-case redundancy of this family:
\begin{equation}
\label{eq:local-redundancy}
R_n^{**}(\theta_0, \epsilon) \triangleq R_n^{**}\bigl(\Theta(\theta_0, \epsilon)\bigr).
\end{equation}

\subsection{Plasticity as Local Redundancy}
\label{sec:capacity-plasticity}

We argue that local redundancy is a natural measure of plasticity. While worst-case redundancy measures performance against the hardest distribution, average-case redundancy $R_n^*(\Theta) = \inf_Q \sup_\pi \mathbb{E}_{\theta \sim \pi}[\mathrm{KL}(P_\theta \| Q)]$ measures performance against a prior-weighted mixture. By the capacity-redundancy theorem \citep{haussler1997mutual}, average-case redundancy equals channel capacity:
\begin{equation}
\label{eq:capacity}
R_n^*(\Theta) = \sup_{\pi \in \mathcal{P}(\Theta)} I(\theta; Y^n | X^n).
\end{equation}
This quantity has a geometric interpretation: capacity equals the \emph{information radius}, the radius of the smallest KL-divergence ball centered at some distribution $Q^*$ that contains all models $\{P_{Y|X,\theta} : \theta \in \Theta\}$ \citep{polyanskiy2024information}. A large information radius means models in $\Theta(\theta_0, \epsilon)$ are able to represent many diverse conditional distributions---a single gradient step can steer the network toward many targets. Low information radius means all locally reachable models have low KL divergence from each other. This matches plasticity in continual learning \citep{lyle2023understanding, dohare2024loss}: a plastic network adapts readily, while a saturated network has collapsed to a region where gradients cannot induce meaningful change in the output distribution. Worst-case and average-case redundancy are asymptotically equal in leading term in classical and singular model families \citep{watanabe2009algebraic}, which motivates reading worst-case local redundancy through this geometric lens. We emphasize, however, that the information-radius interpretation in~\eqref{eq:capacity} serves as geometric \emph{motivation}: our formal results (Theorems~\ref{thm:memorization} and~\ref{thm:gradient}) concern worst-case local redundancy, and we do not claim a finite-sample worst-case/average-case equivalence for the infinitesimal local family.

\subsection{Memorization Bounds for Classification}
\label{sec:memorization}

Next, we show that memorization provides an efficiently computable lower bound on local worst-case redundancy. Intuitively, we ask how much information about \emph{random} targets a single infinitesimal update can encode: if many random labelings of fixed inputs can be fit after one small step, the local family contains many distinguishable conditional distributions and is therefore plastic (Figure~\ref{fig:overview}). Sampling targets from the model's own predictive distribution $P_{\theta_0}$ is what turns this intuition into a lower bound, through the entropy cancellation in the proof of Theorem~\ref{thm:gradient}.

Fix inputs $x^n \in \mathcal{X}^n$ and let $\hat{\theta}_{y^n} \in \argmax_{\theta \in \Theta} P_\theta(y^n|x^n)$ denote the MLE for labeling $y^n$. Define the \emph{memorization gain}:
\begin{align}
\label{eq:mem-gain}
r(x^n; \Theta, \theta_0) &\triangleq H(P_{\theta_0}(Y^n|x^n)) \nonumber\\
&\quad + \mathbb{E}_{y^n \sim P_{\theta_0}(Y^n|x^n)}\bigl[\log P_{\hat{\theta}_{y^n}}(y^n|x^n)\bigr].
\end{align}

\begin{theorem}[Memorization lower bound]
\label{thm:memorization}
For any inputs $x^n$ and model class $\Theta$,
\begin{equation}
R_n^{**}(\Theta) \geq r(x^n; \Theta, \theta_0).
\end{equation}
\end{theorem}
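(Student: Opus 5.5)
The plan is to go through the Shtarkov characterization $R_n^{**}(\Theta) = \sup_{x^n} \log Z(x^n)$ from \eqref{eq:shtarkov}. It is then enough to show $\log Z(x^n) \geq r(x^n;\Theta,\theta_0)$ for the fixed inputs $x^n$, because the supremum over inputs can only make the left side larger. The natural tool is a change of measure from counting measure on $\mathcal{Y}^n$ to the model's own predictive distribution $P_{\theta_0}(\cdot\mid x^n)$, followed by Jensen's inequality.

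For labelings $y^n$ in the support of $P_{\theta_0}(\cdot\mid x^n)$, the MLE satisfies $\sup_{\theta\in\Theta} P_\theta(y^n\mid x^n) = P_{\hat\theta_{y^n}}(y^n\mid x^n)$. All terms of the Shtarkov sum are nonnegative, so dropping labelings outside the support only decreases it, and importance-weighting gives
\begin{equation*}
Z(x^n) \;\geq\; \sum_{y^n:\,P_{\theta_0}(y^n|x^n)>0} P_{\theta_0}(y^n|x^n)\,\frac{P_{\hat\theta_{y^n}}(y^n|x^n)}{P_{\theta_0}(y^n|x^n)} \;=\; \mathbb{E}_{y^n\sim P_{\theta_0}(\cdot|x^n)}\!\left[\frac{P_{\hat\theta_{y^n}}(y^n|x^n)}{P_{\theta_0}(y^n|x^n)}\right].
\end{equation*}
Taking logarithms and applying Jensen's inequality to the concave function $\log$ yields
\begin{equation*}
\log Z(x^n) \;\geq\; \mathbb{E}_{y^n\sim P_{\theta_0}}\bigl[\log P_{\hat\theta_{y^n}}(y^n|x^n)\bigr] - \mathbb{E}_{y^n\sim P_{\theta_0}}\bigl[\log P_{\theta_0}(y^n|x^n)\bigr].
\end{equation*}
The second term is exactly $H(P_{\theta_0}(Y^n\mid x^n))$, so the right-hand side equals $r(x^n;\Theta,\theta_0)$ as defined in \eqref{eq:mem-gain}.

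The step that needs care is well-definedness rather than the inequality. First, the MLE $\hat\theta_{y^n}$ may not be attained. In that case I would replace it throughout by the supremum, which is what the Shtarkov sum actually contains; the bound is unchanged, since $\sup_\theta P_\theta \geq P_{\hat\theta}$ for any admissible choice. Second, I would note that the result does not require $\theta_0 \in \Theta$. If $\theta_0 \in \Theta$, the memorization gain is moreover nonnegative, since then $P_{\hat\theta_{y^n}}(y^n\mid x^n) \geq P_{\theta_0}(y^n\mid x^n)$ pointwise. Third, because $\mathcal{Y}$ is discrete and $n$ is finite, all sums are finite and Jensen's inequality applies without integrability issues. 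Everything else is routine.
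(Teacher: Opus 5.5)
Your proposal is correct and is essentially the paper's proof: the Shtarkov identity, importance weighting by $Q = P_{\theta_0}(\cdot\mid x^n)$, Jensen's inequality for the concave logarithm, and recognizing $-\mathbb{E}[\log P_{\theta_0}]$ as the entropy term. The paper runs the Jensen step for an arbitrary $Q$ over all of $\mathcal{Y}^n$ and then specializes. Your version is slightly more careful, since you restrict to the support of $P_{\theta_0}$ and handle maximizers that are not attained.
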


\begin{proof}
Recall $R_n^{**}(\Theta) = \sup_{x^n} \log Z(x^n)$ where $Z(x^n) = \sum_{y^n} \sup_\theta P_\theta(y^n|x^n)$. For any fixed $x^n$ and any distribution $Q$ over $\mathcal{Y}^n$:
\begin{align}
\log Z(x^n) &= \log \sum_{y^n \in \mathcal{Y}^n} P_{\hat{\theta}_{y^n}}(y^n|x^n) \nonumber\\
&= \log \sum_{y^n} Q(y^n) \frac{P_{\hat{\theta}_{y^n}}(y^n|x^n)}{Q(y^n)} \nonumber\\
&\geq \mathbb{E}_{y^n \sim Q}\left[\log \frac{P_{\hat{\theta}_{y^n}}(y^n|x^n)}{Q(y^n)}\right] \label{eq:jensen}\\
&= H(Q) + \mathbb{E}_{y^n \sim Q}[\log P_{\hat{\theta}_{y^n}}(y^n|x^n)], \nonumber
\end{align}
where \eqref{eq:jensen} applies Jensen's inequality to the concave logarithm. Taking $Q = P_{\theta_0}(\cdot|x^n)$ yields $\log Z(x^n) \geq r(x^n; \Theta, \theta_0)$.
\end{proof}

For local families with small $\epsilon$, the MLE is achieved by moving distance $\epsilon$ along the gradient direction of the given labeling.

\begin{proposition}[Local MLE]
\label{prop:local-mle}
Fix inputs $x^n$ and labels $y^n$. Let $\hat{\theta}_{y^n}(\epsilon) \in \argmax_{\theta \in \Theta(\theta_0, \epsilon)} P_\theta(y^n|x^n)$. Then as $\epsilon \to 0$:
\begin{align}
\hat{\theta}_{y^n}(\epsilon) &= \theta_0 - \epsilon \nabla_\theta L(\theta_0; x^n, y^n) + O(\epsilon^2), \\
L(\hat{\theta}_{y^n}; x^n, y^n) &= L(\theta_0; x^n, y^n) - \epsilon\|\nabla_\theta L(\theta_0; x^n, y^n)\|^2 \nonumber\\
&\quad + O(\epsilon^2). \nonumber
\end{align}
\end{proposition}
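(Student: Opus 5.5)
Write $g(x'^n,y'^n) = \nabla_\theta L(\theta_0; x'^n, y'^n)$ for the gradient of an arbitrary dataset, and $g^\star = g(x^n,y^n)$ for the gradient of the fixed target. The plan is to reduce the maximization over $\Theta(\theta_0,\epsilon)$ to a first-order problem over step directions, then identify the optimizer.

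\textbf{First-order reduction.} Every element of $\Theta(\theta_0,\epsilon)$ has the form $\theta = \theta_0 - \eta\, g(x'^n,y'^n)$ with $\eta\in[0,\epsilon]$. I would Taylor expand $L(\cdot; x^n,y^n)$ around $\theta_0$, assuming $L$ is $C^2$ near $\theta_0$. I would also assume the set $G = \{g(x'^n,y'^n)\}$ is bounded, which holds for discrete $\mathcal{Y}$ and compact $\mathcal{X}$, or under a Lipschitz assumption. This gives
\begin{equation*}
L(\theta; x^n,y^n) = L(\theta_0;x^n,y^n) - \eta\,\langle g^\star, g(x'^n,y'^n)\rangle + O(\eta^2),
\end{equation*}
with the $O(\eta^2)$ term uniform over $G$. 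So to first order, maximizing $P_\theta(y^n|x^n)$ over the local family amounts to maximizing $\eta\langle g^\star, g\rangle$ over $\eta\in[0,\epsilon]$ and $g\in G$.

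\textbf{Identifying the optimizer.} The candidate $(x'^n,y'^n)=(x^n,y^n)$, $\eta=\epsilon$ is admissible and achieves $\epsilon\|g^\star\|^2$. Any competitor achieves at most $\epsilon\|g^\star\|\,\|g\|$ by Cauchy--Schwarz. This shows the target's own gradient is optimal among directions of norm at most $\|g^\star\|$.

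\textbf{Main obstacle.} Nothing in Definition~\ref{def:local-family} normalizes gradient magnitudes. A different dataset, for example a duplicated or scaled one, could produce a gradient $g$ more aligned with $g^\star$ or of larger norm, giving $\langle g^\star,g\rangle>\|g^\star\|^2$. The statement therefore needs an extra ingredient. I would try, in order:
\begin{enumerate}[label=(\roman*)]
\item interpret the family as constraining the step length $\|\theta-\theta_0\|\le \epsilon\|g\|$ relative to the target, so that Cauchy--Schwarz directly makes $g^\star$ optimal; or
\item restrict $G$ to gradients with $\|g\|\le\|g^\star\|$, or note that the statement is used only for a lower bound, where it suffices that the achieved loss is at most $L(\theta_0) - \epsilon\|g^\star\|^2 + O(\epsilon^2)$.
\end{enumerate}
For the downstream Theorem~\ref{thm:gradient}, only this one-sided inequality, $\log P_{\hat\theta_{y^n}}(y^n|x^n) \ge -L(\theta_0) + \epsilon\|g^\star\|^2 + O(\epsilon^2)$, is needed, and it follows from the admissible candidate alone. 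I would prove that rigorously and state the equality under the normalization assumption.

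\textbf{Parameter statement.} For $\hat\theta_{y^n}(\epsilon) = \theta_0 - \epsilon g^\star + O(\epsilon^2)$, I would argue by strict optimality. If the maximizing direction $g_\epsilon$ satisfies $\langle g^\star,g_\epsilon\rangle \ge \|g^\star\|^2 - O(\epsilon)$ under the normalization, then $\|g_\epsilon\|\le\|g^\star\|$ forces $\|g_\epsilon - g^\star\|^2 = O(\epsilon)$. That yields only an $O(\epsilon^{3/2})$ parameter error. Sharpening it to $O(\epsilon^2)$ would require a nondegeneracy condition on the second-order term, and I expect I would need to state it as an assumption.
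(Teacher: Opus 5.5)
Your route is the paper's. Its entire proof is the sentence ``this follows from Taylor expansion: the gradient step maximally decreases loss to first order,'' which is your achievability computation plus an optimality claim it never proves.

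\textbf{The optimality claim fails.} Your objection is correct, and the gap is in the paper, not in your proposal. Steepest descent beats only steps of the same length, but $\Theta(\theta_0,\epsilon)$ also contains steps along other datasets' gradients, and these can be longer. Here is a concrete counterexample.
\begin{itemize}
\item Take $n=1$, $\mathcal{Y}=\{\pm1\}$, $P_\theta(y\mid x)=\sigma(y\,\theta^\top x)$ with $\sigma$ the logistic function, and $\theta_0=0$. Then $\nabla_\theta L(0;x,y)=-\tfrac12 yx$.
\item The target's own step gives $L=\log 2-\tfrac{\epsilon}{4}\|x\|^2+O(\epsilon^2)$.
\item The competitor $(2x,y)$ is admissible whenever $2x\in\mathcal{X}$, for example a dim image and its brightened copy. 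It gives $\theta=\epsilon yx$ and $L=\log(1+e^{-\epsilon\|x\|^2})=\log 2-\tfrac{\epsilon}{2}\|x\|^2+O(\epsilon^2)$, a first-order decrease twice as large.
\end{itemize}
So both displayed equalities fail in general. With $\mathcal{X}=\mathbb{R}^d$ the family is unbounded for every $\epsilon>0$ and the argmax does not exist, so your boundedness assumption on $G$ is genuinely needed.

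\textbf{What survives is enough.} The true statement is
\[
\sup_{\theta\in\Theta(\theta_0,\epsilon)}\log P_\theta(y^n|x^n)\ \ge\ -L(\theta_0;x^n,y^n)+\epsilon\|g^\star\|^2+O(\epsilon^2),
\]
and this is exactly what the proof of Theorem~\ref{thm:gradient} consumes. The Shtarkov sum takes a supremum, so attainment is not needed. Finiteness of $\mathcal{Y}^n$ makes the remainder uniform under the expectation. Your fallbacks are weaker than this. Option (i) is unavailable, because a labeling-dependent family is not a fixed model class in~\eqref{eq:shtarkov}. Option (ii), imposed for every labeling as Theorem~\ref{thm:gradient} would require, forces all norms $\|\nabla_\theta L(\theta_0;x^n,y^n)\|$, $y^n\in\mathcal{Y}^n$, to coincide. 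So keeping the one-sided statement is the right call.

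\textbf{One refinement.} Under (ii) you do not need a nondegeneracy condition to get an $O(\epsilon^2)$ parameter error. Your $O(\epsilon^{3/2})$ comes from expanding at $\theta_0$ and comparing first-order terms against an additive $O(\epsilon^2)$ remainder. Instead, expand at the candidate point.
\begin{itemize}
\item Set $u=\epsilon g^\star$, let $w=\eta g$ be the maximizer's step, and put $v=u-w$ and $\theta_1=\theta_0-u$, so that $\hat\theta=\theta_1+v$.
\item Since $\|w\|\le\|u\|$, we have $\|v\|^2\le 2\langle u,v\rangle$, that is, $\langle g^\star,v\rangle\ge\|v\|^2/(2\epsilon)$.
\item If $\nabla_\theta L(\cdot;x^n,y^n)$ is $M$-Lipschitz near $\theta_0$, then $\|\nabla_\theta L(\theta_1)-g^\star\|\le M\epsilon\|g^\star\|$.
\item Optimality of $\hat\theta$ against the admissible point $\theta_1$ then gives
\[
0\ \ge\ L(\theta_1+v)-L(\theta_1)\ \ge\ \frac{\|v\|^2}{2\epsilon}-M\epsilon\|g^\star\|\,\|v\|-\frac{M}{2}\|v\|^2 .
\]
\end{itemize}
Hence $\|v\|\le 2M\epsilon^2\|g^\star\|/(1-M\epsilon)=O(\epsilon^2)$. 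The loss equality then follows from $L(\theta_1)=L(\theta_0)-\epsilon\|g^\star\|^2+O(\epsilon^2)$.
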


This follows from Taylor expansion: the gradient step maximally decreases loss to first order. Combining with Theorem~\ref{thm:memorization} yields a characterization of local redundancy in terms of gradient norms.

\begin{theorem}[Local redundancy via gradients]
\label{thm:gradient}
For the local family $\Theta(\theta_0, \epsilon)$ with sufficiently small $\epsilon$,
\begin{align}
\label{eq:gradient-bound}
R_n^{**}(\theta_0, \epsilon) &\geq \epsilon \cdot \mathbb{E}_{y^n \sim P_{\theta_0}}\bigl[\|\nabla_\theta L(\theta_0; x^n, y^n)\|^2\bigr] \nonumber\\
&\quad + O(\epsilon^2).
\end{align}
\end{theorem}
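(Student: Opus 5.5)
The plan is to apply Theorem~\ref{thm:memorization} to the local class $\Theta(\theta_0,\epsilon)$ with the reference distribution $Q = P_{\theta_0}(\cdot\mid x^n)$, and then expand the MLE term using Proposition~\ref{prop:local-mle}. Fix any inputs $x^n$; the bound then holds for that $x^n$, and hence for the supremum defining $R_n^{**}$. Theorem~\ref{thm:memorization} gives
\begin{equation*}
R_n^{**}(\theta_0,\epsilon) \ge H(P_{\theta_0}(Y^n|x^n)) + \mathbb{E}_{y^n\sim P_{\theta_0}}\bigl[\log P_{\hat\theta_{y^n}(\epsilon)}(y^n|x^n)\bigr].
\end{equation*}
Write $\log P_{\hat\theta_{y^n}(\epsilon)}(y^n|x^n) = -L(\hat\theta_{y^n}(\epsilon);x^n,y^n)$.

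Next, substitute the expansion from Proposition~\ref{prop:local-mle}:
\begin{equation*}
-L(\hat\theta_{y^n}(\epsilon)) = -L(\theta_0;x^n,y^n) + \epsilon\|\nabla_\theta L(\theta_0;x^n,y^n)\|^2 + O(\epsilon^2).
\end{equation*}
Take the expectation over $y^n\sim P_{\theta_0}(\cdot|x^n)$. The key cancellation is
\begin{equation*}
\mathbb{E}_{y^n\sim P_{\theta_0}}\bigl[-L(\theta_0;x^n,y^n)\bigr] = \mathbb{E}_{y^n\sim P_{\theta_0}}\bigl[\log P_{\theta_0}(y^n|x^n)\bigr] = -H(P_{\theta_0}(Y^n|x^n)),
\end{equation*}
so the entropy term exactly cancels. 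What remains is $\epsilon\,\mathbb{E}_{y^n\sim P_{\theta_0}}\|\nabla_\theta L\|^2 + \mathbb{E}[O(\epsilon^2)]$, which is the claimed bound.

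The main obstacle is the remainder term. Proposition~\ref{prop:local-mle} gives an $O(\epsilon^2)$ error for each fixed labeling $y^n$, but we need this uniformly in order to pass the expectation through. Since $\mathcal{Y}^n$ is finite for fixed $n$, the expectation is a finite sum, and I would simply take the maximum of the finitely many constants. It still remains to justify each per-labeling constant.

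For that, I would argue directly rather than rely on the proposition's $\argmax$ characterization. Restricting the supremum to the single point $\theta_0-\epsilon\nabla L(\theta_0;x^n,y^n)$ with $\eta=\epsilon$, which lies in $\Theta(\theta_0,\epsilon)$, can only lower $\sup_\theta \log P_\theta$. This yields an inequality in the right direction without needing to identify the exact MLE. A second-order Taylor bound for this point needs only that $L$ be $C^2$ near $\theta_0$, which I would assume as standing smoothness, with the Hessian bounded on a neighbourhood of $\theta_0$. This makes the remainder an honest $-C\epsilon^2$ with $C$ depending on $x^n$, $n$ and the local Hessian bound, which is all the statement requires.
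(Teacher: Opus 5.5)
Your argument is correct and shares the paper's skeleton: apply Theorem~\ref{thm:memorization} to $\Theta(\theta_0,\epsilon)$ with $Q=P_{\theta_0}(\cdot\mid x^n)$, then cancel the entropy against $\mathbb{E}_{y^n\sim P_{\theta_0}}[-L(\theta_0;x^n,y^n)]$. The difference is in how you handle the MLE term.

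The paper substitutes Proposition~\ref{prop:local-mle}, an asymptotic \emph{equality} for the optimal loss over the local family. This gives $r=\epsilon\,\mathbb{E}\|\nabla_\theta L\|^2+O(\epsilon^2)$, and the paper leaves uniformity of the remainder in $y^n$ implicit. You instead bound the supremum below by the single feasible point $\theta_0-\epsilon\nabla_\theta L(\theta_0;x^n,y^n)$. You then control the remainder with a local Hessian bound plus finiteness of $\mathcal{Y}^n$. This yields only $r\ge\epsilon\,\mathbb{E}\|\nabla_\theta L\|^2-C\epsilon^2$, which is exactly what the theorem asserts.

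Your route needs neither the existence of a maximizer nor first-order optimality of the labeling's own gradient step. The latter is genuinely not automatic. $\Theta(\theta_0,\epsilon)$ also contains steps along $\nabla_\theta L(\theta_0;x'^n,y'^n)$ for other datasets. Such a step decreases $L(\cdot;x^n,y^n)$ faster to first order whenever $\langle\nabla_\theta L(\theta_0;x^n,y^n),\nabla_\theta L(\theta_0;x'^n,y'^n)\rangle>\|\nabla_\theta L(\theta_0;x^n,y^n)\|^2$, for example in a scalar logistic model at $\theta_0=0$ probed with an input of larger magnitude. So the equality in Proposition~\ref{prop:local-mle} can fail, whereas your one-sided bound always holds. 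The theorem survives either way, because the true optimum can only do better than the feasible point you chose.
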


\begin{proof}
By Proposition~\ref{prop:local-mle}, averaging over $y^n \sim P_{\theta_0}$:
\begin{align}
&\mathbb{E}_{y^n}[\log P_{\hat{\theta}_{y^n}}(y^n|x^n)] \nonumber\\
&\quad = \mathbb{E}_{y^n}[\log P_{\theta_0}(y^n|x^n)] + \epsilon\,\mathbb{E}_{y^n}\bigl[\|\nabla_\theta L\|^2\bigr] + O(\epsilon^2) \nonumber\\
&\quad = -H(P_{\theta_0}) + \epsilon\,\mathbb{E}_{y^n}\bigl[\|\nabla_\theta L\|^2\bigr] + O(\epsilon^2).
\end{align}
Substituting into~\eqref{eq:mem-gain}, the entropy terms cancel, giving $r = \epsilon\,\mathbb{E}_{y^n}[\|\nabla_\theta L\|^2] + O(\epsilon^2)$. The result follows from Theorem~\ref{thm:memorization}.
\end{proof}

\begin{figure}[t]
\centering
\includegraphics[width=\columnwidth]{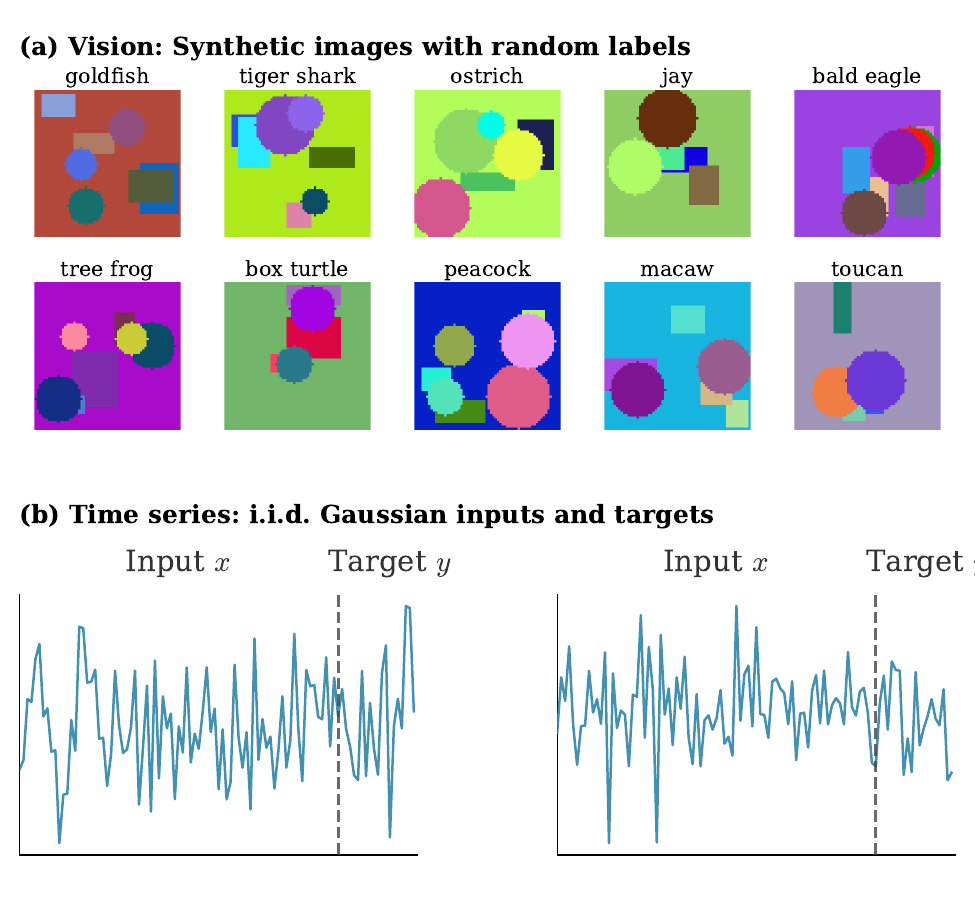}
\caption{Synthetic memorization datasets. \textbf{(a)} Vision classification uses input images consisting of randomly generated overlapping shapes on random backgrounds, paired with labels sampled from the model's predictive distribution. \textbf{(b)} Time series regression uses i.i.d.\ Gaussian inputs and targets with no temporal structure, sampled from the model's predictive distribution. By Theorem~\ref{thm:memorization}, any choice of inputs $x^n$ and properly randomized targets yields an estimator that lower bounds the worst-case redundancy. We design inputs to make memorization easier for model parameters to express, yielding a smoother and stronger lower bound on local redundancy.}
\label{fig:synthetic-data}
\end{figure}

\subsection{Memorization Bounds for Regression}
\label{sec:regression}

The preceding results assume discrete labels $\mathcal{Y}$. We now show that our methods generalize to regression settings, where labels are continuous, using rate-distortion theory. This extension justifies measuring local redundancy via gradient norms on random Gaussian targets---precisely the setup used in our time series experiments.

\begin{proposition}[Regression memorization bound]
\label{prop:regression}
Consider regression with MSE loss. Let $Q = \mathcal{N}(\mu, \sigma_y^2 I)$ be any target distribution over $\mathbb{R}^m$ with arbitrary mean $\mu \in \mathbb{R}^m$. If model family $\Theta$ achieves expected MSE $= D$ when fitting $n$ i.i.d.\ samples $y_i \sim Q$, then
\begin{equation}
R_n^{**}(\Theta) \geq \frac{nm}{2}\log\frac{\sigma_y^2}{D}.
\end{equation}
\end{proposition}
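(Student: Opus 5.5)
We will prove the regression analogue of Theorem~\ref{thm:memorization}, replacing the Shtarkov sum by its continuous counterpart, and the discrete entropy by differential entropy. Model $y_i \mid x_i,\theta \sim \mathcal{N}(f_\theta(x_i), s^2 I_m)$, so that log-loss is an affine function of MSE. For continuous labels, the Shtarkov quantity is the normalizer
\[
Z(x^n)=\int \sup_\theta p_\theta(y^n|x^n)\,dy^n ,
\]
and the minimax identity $R_n^{**}=\sup_{x^n}\log Z(x^n)$ carries over verbatim with densities.

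The first step repeats the Jensen argument from the proof of Theorem~\ref{thm:memorization} with an arbitrary density $q$ on $\mathbb{R}^{nm}$:
\[
\log Z(x^n)\ \ge\ h(q)+\mathbb{E}_{y^n\sim q}\bigl[\log p_{\hat\theta_{y^n}}(y^n|x^n)\bigr].
\]
We take $q=Q^{\otimes n}$ with $Q=\mathcal{N}(\mu,\sigma_y^2 I)$. This gives $h(q)=\frac{nm}{2}\log(2\pi e\sigma_y^2)$.

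Next we evaluate the fitted likelihood under the Gaussian noise model with variance $s^2$:
\[
\log p_{\hat\theta}(y^n|x^n)=-\frac{nm}{2}\log(2\pi s^2)-\frac{1}{2s^2}\sum_i\|y_i-f_{\hat\theta}(x_i)\|^2 .
\]
The hypothesis says the expected residual sum equals $nmD$, with $D$ taken as the per-coordinate MSE. The lower bound therefore becomes
\[
\frac{nm}{2}\Bigl[\log\frac{\sigma_y^2}{s^2}+1-\frac{D}{s^2}\Bigr].
\]
We then let the noise scale be chosen optimally, $s^2=D$. This is legitimate if $s$ is treated as part of the model class, or equivalently by profiling the variance into the MLE. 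With this choice the bracket equals $\log(\sigma_y^2/D)$, which yields the claim. The mean $\mu$ never enters, since the Gaussian differential entropy is translation invariant, so any $\mu$ works.

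The main obstacle is making the continuous Shtarkov normalizer and the role of $s$ rigorous. One issue is that if $\Theta$ includes the variance parameter, then the MLE variance for each $y^n$ is the empirical residual $\hat D_{y^n}$, not the expected value $D$. We would then need $\mathbb{E}[\log \hat D]\le \log \mathbb{E}[\hat D]=\log D$, which holds by Jensen's inequality and points in the right direction. A second issue is that with fixed $s$ the inequality is exact only when $s^2=D$. As a fallback, we would present the bound through the rate-distortion lower bound $R(D)=\frac{nm}{2}\log(\sigma_y^2/D)$ for a Gaussian source. This interprets the fitted models as a reproduction codebook whose log-cardinality (capacity) bounds redundancy from below.
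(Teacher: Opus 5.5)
Your proposal is correct and is essentially the paper's argument. The paper discretizes the targets, applies Theorem~\ref{thm:memorization} to the induced classification problem, passes to the fine-quantization limit, and cites the Gaussian rate-distortion function for the constant. You instead run the same Jensen step directly with densities and compute $h(Q^{\otimes n})-\frac{nm}{2}\log(2\pi e D)=\frac{nm}{2}\log(\sigma_y^2/D)$ explicitly, so your rate-distortion fallback is not needed.

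Your point that the likelihood variance must either equal the achieved MSE ($s^2=D$) or be a free parameter in $\Theta$ is a real hidden hypothesis that the paper's sketch glosses over. With a fixed $s^2\neq D$ the bound degrades to $\frac{nm}{2}\bigl[\log(\sigma_y^2/s^2)+1-D/s^2\bigr]$. For large fixed $s$ the stated inequality can actually fail: for a bounded family of means, $R_n^{**}(\Theta)\to 0$ as $s\to\infty$, while $D$ does not depend on $s$.
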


This follows from the Gaussian rate-distortion theorem: representing $\mathcal{N}(\mu, \sigma_y^2)$ samples to MSE distortion $D$ requires rate $R(D) = \frac{1}{2}\log(\sigma_y^2/D)$ bits per scalar \citep{cover2006elements}. Discretizing targets and applying Theorem~\ref{thm:memorization} to the induced classification problem yields the bound in the limit of fine quantization.

\begin{corollary}[Gradient bound for regression]
\label{cor:regression-gradient}
Theorem~\ref{thm:gradient} extends to MSE loss. Let $P_{\theta_0}(y|x) = \mathcal{N}(f_{\theta_0}(x), \sigma^2 I)$. Then:
\begin{align}
R_n^{**}(\theta_0, \epsilon) &\geq \epsilon \cdot \mathbb{E}_{y^n \sim P_{\theta_0}}\bigl[\|\nabla_\theta L(\theta_0; x^n, y^n)\|^2\bigr] \nonumber\\
&\quad + O(\epsilon^2).
\end{align}
\end{corollary}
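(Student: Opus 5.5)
The plan is to reduce the continuous-target setting to the discrete one by quantization, apply Theorem~\ref{thm:memorization} to the quantized problem, and then repeat the Taylor-expansion argument of Theorem~\ref{thm:gradient}.

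Fix inputs $x^n$ and a quantization width $\Delta>0$. Partition $\mathbb{R}^{m}$ into cubes of side $\Delta$. The quantized model class assigns to each cell $c$ the probability $P^\Delta_\theta(c|x)=\int_c \mathcal{N}(y; f_\theta(x),\sigma^2 I)\,dy$. This is a discrete-label family with a countable label set.

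\emph{Step 1: redundancy bound and entropy cancellation.} Applying Theorem~\ref{thm:memorization} to the quantized family with $Q=P^\Delta_{\theta_0}$ gives
\begin{align*}
\log Z^\Delta(x^n) \geq H(P^\Delta_{\theta_0}) + \mathbb{E}_{c^n\sim P^\Delta_{\theta_0}}\bigl[\log P^\Delta_{\hat\theta_{c^n}}(c^n|x^n)\bigr].
\end{align*}
Both terms carry the same divergent offset $-nm\log\Delta$ as $\Delta\to 0$: $H(P^\Delta)\approx h(P_{\theta_0})-nm\log\Delta$ and $\log P^\Delta\approx \log p - nm\log\Delta$. So the offsets cancel, exactly as the entropy terms cancelled in Theorem~\ref{thm:gradient}. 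The right-hand side then converges to
\begin{align*}
h(P_{\theta_0})+\mathbb{E}_{y^n\sim P_{\theta_0}}\bigl[\log p_{\hat\theta_{y^n}}(y^n|x^n)\bigr].
\end{align*}
Two facts are needed to pass to the limit. First, the Shtarkov sum of the quantized family converges to the continuous Shtarkov integral $\int \sup_\theta p_\theta(y^n|x^n)\,dy^n$, after the same offset is removed. Second, the continuous worst-case redundancy $R_n^{**}$ is this integral's logarithm. Both follow from dominated convergence, because the local family $\Theta(\theta_0,\epsilon)$ is bounded and the Gaussian densities are uniformly Lipschitz over it.

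\emph{Step 2: first-order expansion.} For continuous $y^n$, the loss is
\begin{align*}
L=\tfrac{1}{2\sigma^2}\textstyle\sum_i\|y_i-f_\theta(x_i)\|^2+\text{const}.
\end{align*}
By Proposition~\ref{prop:local-mle}, which uses only smoothness of $L$ in $\theta$, we get $\log p_{\hat\theta_{y^n}}=\log p_{\theta_0}+\epsilon\|\nabla_\theta L\|^2+O(\epsilon^2)$. Taking expectations over $y^n\sim P_{\theta_0}$ and using $\mathbb{E}[\log p_{\theta_0}]=-h(P_{\theta_0})$ cancels the differential entropy. This leaves $\epsilon\,\mathbb{E}\|\nabla L\|^2+O(\epsilon^2)$.

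\emph{Main obstacle: uniformity of the remainder.} The $O(\epsilon^2)$ term has to be uniform enough to survive expectation over unbounded Gaussian $y^n$. The remainder is controlled by the Hessian of $L$ along the gradient direction. Since $L$ is quadratic in the residual, this Hessian grows at most quadratically in $\|y^n\|$. The gradient norm itself is linear in $\|y^n\|$. Gaussian moments of every order are finite, so the expected remainder is $O(\epsilon^2)$ with a constant depending only on $\sigma$, $f$, and its first two derivatives near $\theta_0$. If the order of limits ($\Delta\to0$ versus $\epsilon\to0$) causes trouble, I would instead truncate $y^n$ to a large ball and let the truncation radius grow after taking $\epsilon\to0$.
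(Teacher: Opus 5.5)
Your route is the one the paper sketches: quantize the targets, apply Theorem~\ref{thm:memorization} to the induced discrete problem, cancel the entropy terms, and expand to first order. The inequality you reach is true, but two of your justifications rest on a false premise.

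For MSE loss, $\nabla_\theta L(\theta_0;x^n,y^n)=-\sigma^{-2}\sum_i J_i^\top\bigl(y_i-f_{\theta_0}(x_i)\bigr)$, where $J_i$ is the Jacobian of $f_\theta(x_i)$ at $\theta_0$. This is affine in $y^n$, so as $y^n$ ranges over $\mathbb{R}^{nm}$ it sweeps out the whole span of the rows of $J_1,\dots,J_n$. Hence, for every $\epsilon>0$, $\Theta(\theta_0,\epsilon)$ contains the affine subspace $\theta_0+\mathrm{span}\{\text{rows of } J_1,\dots,J_n\}$. In regression the ``local'' family is therefore unbounded and does not shrink with $\epsilon$; for a linear model the local redundancy is in fact infinite.

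This has two consequences. Your dominated-convergence step, justified by boundedness of $\Theta(\theta_0,\epsilon)$, is unsupported. And Proposition~\ref{prop:local-mle} cannot give the \emph{equality} $\log p_{\hat\theta_{y^n}}=\log p_{\theta_0}+\epsilon\|\nabla_\theta L\|^2+O(\epsilon^2)$, because the MLE over this family need not lie near $\theta_0$. The paper's one-line extension glosses over the same point.

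The repair is to use only what a lower bound needs.
\begin{itemize}
\item In the Jensen step of Theorem~\ref{thm:memorization}, replace $\hat\theta_{y^n}$ by any selection $\theta_{y^n}\in\Theta(\theta_0,\epsilon)$, e.g.\ $\theta_{y^n}=\theta_0-\epsilon\nabla_\theta L(\theta_0;x^n,y^n)$. This is valid because $p_{\hat\theta_{y^n}}\ge p_{\theta_{y^n}}$.
\item Drop the claimed convergence of the Shtarkov sum. Since $\sup_\theta\int_c p_\theta\le\int_c\sup_\theta p_\theta$, you already have $Z^\Delta\le\int\sup_\theta p_\theta\,dy^n$ for every $\Delta$. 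Alternatively, apply Jensen to the Shtarkov integral directly and skip quantization.
\item Fix the sign slip in the offsets. $H(P^\Delta_{\theta_0})\approx h(P_{\theta_0})-nm\log\Delta$, while $\log P^\Delta\approx\log p+nm\log\Delta$. The offsets are opposite, which is why they cancel, and the Shtarkov sum itself needs no offset removal.
\end{itemize}

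The same unboundedness undermines your remainder estimate. Write $r=y^n-f_{\theta_0}(x^n)$. The step length $\epsilon\|\nabla_\theta L\|$ grows linearly in $\|r\|$, so for large $r$ the intermediate point of the Taylor expansion leaves every fixed neighbourhood of $\theta_0$. Derivatives of $f$ ``near $\theta_0$'' then do not bound the remainder; you would need global growth assumptions on the first two $\theta$-derivatives of $f_\theta(x_i)$.

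A fix needing only local $C^2$ smoothness again uses the fact that any feasible point will do. Take $\theta_{y^n}=\theta_0-\epsilon\nabla_\theta L$ when $\|r\|\le\epsilon^{-1/2}$, and $\theta_{y^n}=\theta_0$ (i.e.\ $\eta=0$) otherwise.
\begin{itemize}
\item On the first event the step has length $O(\epsilon^{1/2})$. Since the Hessian of $L$ is affine in the residual, the remainder is $O\bigl(\epsilon^2(1+\|r\|)^3\bigr)$, whose Gaussian expectation is $O(\epsilon^2)$.
\item On the second event you forfeit $\epsilon\,\mathbb{E}\bigl[\|\nabla_\theta L\|^2;\|r\|>\epsilon^{-1/2}\bigr]$, which is super-polynomially small by the Gaussian tail.
\end{itemize}
Your stated fallback is to fix the truncation radius, send $\epsilon\to0$, and then enlarge the radius. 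That only yields $\epsilon\,\mathbb{E}\|\nabla_\theta L\|^2-o(\epsilon)$, not the claimed $O(\epsilon^2)$ remainder.
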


The gradient norm measures the rate (in bits per unit step size) at which the model can encode information about random targets.

\subsection{Synthetic Memorization Datasets}
\label{sec:practical}

Theorems~\ref{thm:memorization},~\ref{thm:gradient}, and Corollary~\ref{cor:regression-gradient} reduce estimating local redundancy to measuring gradient norms on synthetic data with random labels (or regression targets). We emphasize that the model is never trained on this synthetic data: the synthetic batch is used only to probe the local family and evaluate the gradient-norm estimator, leaving the parameters $\theta_0$ unchanged. The randomized-target rule $y \sim P_{\theta_0}$ is also essential---it is what makes the gradient norm a lower bound on local redundancy rather than a generic gradient-sensitivity heuristic. The quality of this lower bound depends only on how well the synthetic data saturates model capacity; we ablate the choice of synthetic inputs in Appendix~\ref{app:ablation}. Since local redundancy is intractable to compute directly, we use this lower bound as a proxy for it throughout our experiments.

\textbf{Vision classification.} We generate synthetic images by: (i) filling a canvas with a uniform random RGB background; (ii) overlaying randomly positioned rectangles with random colors; and (iii) overlaying randomly positioned circles with random colors. For each image $x$, we sample the label from the model's predictive distribution $y \sim P_{\theta_0}(y|x)$. These images are visually diverse but contain no learnable structure, lower bounding local redundancy by Theorem~\ref{thm:memorization}.

\textbf{Time series regression.} We generate random Gaussian context windows $x \sim \mathcal{N}(0, I)$ of shape (context length $\times$ features). Next, prediction targets are sampled from the model's predictive distribution $y \sim \mathcal{N}(f_{\theta_0}(x), \sigma^2 I)$. By Corollary~\ref{cor:regression-gradient}, the squared gradient norm on this data lower bounds local redundancy.
\section{Experiments}
\label{sec:experiments}

We validate our theoretical framework in two settings: continual image classification and time series transfer learning. In continual learning, we show that local redundancy decreases during training and predicts future task performance better than existing metrics. In time series transfer, we demonstrate that local redundancy identifies optimal pretraining checkpoints when validation loss plateaus. Throughout, we estimate local redundancy via Theorem~\ref{thm:gradient}: the mean squared gradient norm on synthetic data with random labels sampled from the model's predictive distribution.

\begin{table}[t]
\centering
\caption{Correlation ($\times 100$) between each plasticity metric and average future task accuracy ($t+1, \ldots, t+10$), after regressing out task number. Values are correlations with the residual accuracy. Local redundancy attains the strongest residual correlation for both Pearson and Spearman.}
\label{tab:correlation}
\vskip 0.1in
\begin{small}
\setlength{\tabcolsep}{4pt}
\begin{tabular}{lcc}
\toprule
Plasticity Metric & Pearson $r$ (res.) & Spearman $\rho$ (res.) \\
\midrule
\textbf{Local redundancy} & $\mathbf{8.9 \pm 0.4}$ & $\mathbf{8.8 \pm 0.4}$ \\
Distance from init & $7.3 \pm 0.4$ & $5.9 \pm 0.4$ \\
Weight norm & $4.6 \pm 0.4$ & $7.1 \pm 0.4$ \\
Dormant neuron ratio & $3.4 \pm 0.5$ & $4.0 \pm 0.4$ \\
Training grad. norm & $1.6 \pm 0.4$ & $5.0 \pm 0.4$ \\
\bottomrule
\end{tabular}
\end{small}
\end{table}

\subsection{Continual Learning}
\label{sec:exp-continual}

\textbf{Setup.} We evaluate on Continual ImageNet \citep{dohare2024loss}, a continual learning benchmark derived from ImageNet \citep{deng2009imagenet}. The benchmark constructs a sequence of binary classification tasks by pairing randomly selected ImageNet classes. With 1000 ImageNet classes, there are approximately 500,000 possible pairs; we uniformly sample 3000 tasks without replacement. Because each task involves different classes drawn from the same distribution, task difficulty remains constant in expectation, so any systematic decline in performance reflects loss of plasticity.

\textbf{Protocol.} We train MobileNetV3-Large \citep{howard2019searching}, a convolutional network, on these tasks sequentially without access to previous task data: after training and evaluation on one task, the network proceeds to the next with weights carried over but no replay of earlier examples. We estimate local redundancy at each task boundary via Theorem~\ref{thm:gradient}. We generate $n = 5000$ synthetic images as described in Section~\ref{sec:practical}, each consisting of 5 rectangles and 5 circles drawn in random order with random sizes, positions, and RGB colors on a random RGB background, with labels sampled from the model's predictive distribution. We compute the mean squared gradient norm of the cross-entropy loss with respect to all parameters, averaged over the synthetic batch. Full training details are provided in Appendix~\ref{app:implementation}.

\textbf{Baselines.} We compare against four existing plasticity metrics: weight norm, distance from initialization, dormant neuron ratio \citep{sokar2023dormant}, and training gradient norm. Weight norm, distance from initialization, and dormant neuron ratio capture existing measures of plasticity. Training gradient norm computes gradient magnitude on real task data rather than synthetic memorization data; comparing against this baseline isolates the contribution of our synthetic data design.

\textbf{Results.} Figure~\ref{fig:capacity-vs-accuracy} shows local redundancy and test accuracy across the task sequence. Local redundancy declines steadily as training progresses, consistent with progressive plasticity loss as the network commits to learned mappings. To test whether this decline predicts future performance, we correlate each metric at task $t$ with average accuracy over tasks $t+1, \ldots, t+10$. Because accuracy drifts downward due to catastrophic forgetting, task number alone is highly predictive: a metric that simply decreases over time would correlate with future accuracy without capturing useful signal. To isolate genuine predictive signal above overall downward drift of performance, we first regress out task number and measure correlation with the residuals. Future task accuracy is inherently noisy: some class pairs are easy to distinguish (e.g., animals vs.\ vehicles) while others are difficult (e.g., two similar dog breeds), and tasks are sampled randomly. Still, Table~\ref{tab:correlation} shows that local redundancy attains a meaningful residual correlation---the highest across all plasticity metrics for both Pearson and Spearman.

\begin{figure}[t]
\centering
\includegraphics[width=\columnwidth]{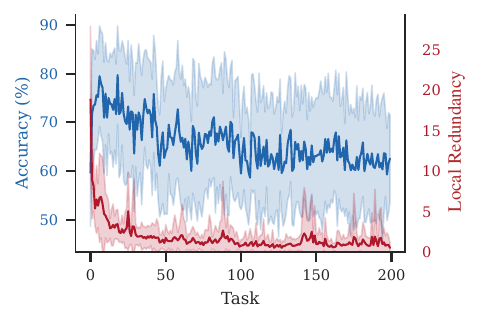}
\caption{Local redundancy (red) and test accuracy (blue) over 200 sequential binary classification tasks on Continual ImageNet. Each task consists of distinguishing a randomly selected pair of ImageNet classes, with training proceeding sequentially without re-initialization. Local redundancy decreases as the network commits to learned mappings, tracking the expected loss of plasticity.}
\label{fig:capacity-vs-accuracy}
\end{figure}

\textbf{Stability--plasticity tradeoff.} We also ask how local redundancy relates to forgetting. For each task we measure the drop in a previous task's accuracy after training on the next (positive values indicate more forgetting), again regressing out task number. Table~\ref{tab:forgetting} reports the correlation of each metric with this forgetting signal. Local redundancy is positively correlated with forgetting and among the largest such correlations across all metrics. This is consistent with interpreting it as plasticity: a network that retains more capacity to adapt also overwrites prior tasks more readily, recovering the stability--plasticity tradeoff.

\begin{table}[t]
\centering
\caption{Correlation ($\times 100$) between each plasticity metric and forgetting (the drop in a previous task's accuracy after training on the next task; positive indicates more forgetting), after regressing out task number. Local redundancy is positively associated with forgetting and among the largest such correlations, consistent with the stability--plasticity tradeoff.}
\label{tab:forgetting}
\vskip 0.1in
\begin{small}
\setlength{\tabcolsep}{4pt}
\begin{tabular}{lcc}
\toprule
Plasticity Metric & Pearson $r$ (res.) & Spearman $\rho$ (res.) \\
\midrule
\textbf{Local redundancy} & $13.0 \pm 4.5$ & $23.5 \pm 4.4$ \\
Distance from init & $-2.0 \pm 4.5$ & $3.7 \pm 4.5$ \\
Weight norm & $2.2 \pm 4.5$ & $4.1 \pm 4.5$ \\
Dormant neuron ratio & $-5.3 \pm 4.5$ & $-2.9 \pm 4.5$ \\
Training grad.\ norm & $6.2 \pm 4.5$ & $23.6 \pm 4.4$ \\
\bottomrule
\end{tabular}
\end{small}
\end{table}

\subsection{Time Series Transfer Learning}
\label{sec:exp-timeseries}

We apply local redundancy to checkpoint selection for time series forecasting, where transfer learning is common due to limited labeled data in target domains. With limited data, selecting pretrained checkpoints is difficult---models often overfit to the source domain and fail to generalize to the target. We show that local redundancy can be used to select pretrain checkpoints with higher downstream performance by identifying checkpoints that have higher plasticity.

\textbf{Setup.} We evaluate on the ETT (Electricity Transformer Temperature) benchmark \citep{zhou2021informer}, a standard testbed for time series transfer learning. The dataset contains measurements from electricity transformers stations, including oil temperature and six power load features recorded over two years. We pretrain on ETTm1 (15-minute readings from station 1) and finetune on ETTh2 (hourly readings from station 2). This setup represents realistic domain shift: the datasets share the same features but differ in temporal granularity and station-specific dynamics. The task is multivariate forecasting: given a context window, predict future values across all features.

\begin{table}[t]
\centering
\caption{Average fine-tuning validation MSE on ETTh2 across checkpoint selection strategies (120 seeds, mean $\pm$ stderr). Selecting the checkpoint with maximum local redundancy achieves better downstream performance than lowest validation loss: local redundancy identifies checkpoints that retain capacity to adapt, even when validation loss has plateaued. Local redundancy is the only plasticity metric with this property, outperforming dormant neuron ratio and weight norm. Linear probing outperforms local redundancy but requires significantly more compute and access downstream labels, making it inadmissible when the target domain is unknown at selection time.}
\label{tab:timeseries}
\vskip 0.1in
\begin{small}
\begin{tabular}{lc}
\toprule
Selection Strategy & Finetune Val Loss \\
\midrule
Linear probing$^\dagger$ & $0.1918 \pm 0.0005$ \\
\textbf{Max local redundancy} & $\mathbf{0.1937 \pm 0.0005}$ \\
Lowest val.\ loss & $0.1952 \pm 0.0006$ \\
Min dormant ratio & $0.1960 \pm 0.0005$ \\
Min weight norm & $0.1968 \pm 0.0004$ \\
No pretraining & $0.2168 \pm 0.0079$ \\
\bottomrule
\multicolumn{2}{l}{\footnotesize $^\dagger$Checkpoint selection requires access to downstream task.}
\end{tabular}
\end{small}
\end{table}

\begin{figure}[t]
\centering
\includegraphics[width=\columnwidth]{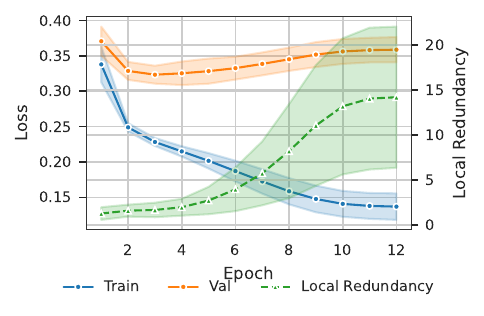}
\caption{Training loss (blue), validation loss (orange), and local redundancy (green) during pretraining on ETTm1 (shading indicates standard deviation across seeds). Validation loss saturates after 3-6 epochs, yet local redundancy continues to increase up to 12 epochs, peaking at different epochs for different seeds. The network's capacity to adapt continues to evolve after validation performance saturates: selecting checkpoints with large local redundancy takes advantage of this and leads to stronger downstream performance.}
\label{fig:pretrain-curves}
\end{figure}

\begin{figure}[t]
\centering
\includegraphics[width=\columnwidth]{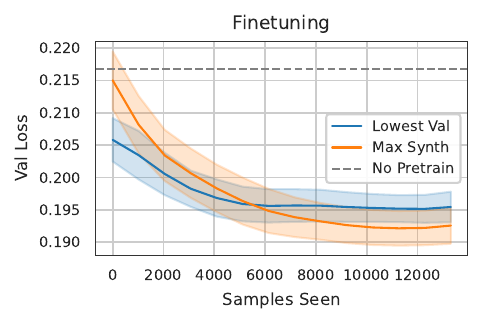}
\caption{Fine-tuning validation loss on ETTh2 for checkpoints selected by lowest pretraining validation loss (blue) versus maximum local redundancy (orange). The high-redundancy checkpoint begins with higher loss but adapts more quickly, achieving a lower final loss on average (shading indicates standard deviation across seeds). Dashed line shows the final validation loss when training from scratch.}
\label{fig:finetune-comparison}
\end{figure}

\textbf{Checkpoint selection strategies.} The standard approach selects the checkpoint with lowest pretraining validation loss. We compare against selecting by maximum local redundancy, as well as two other proxies: minimum dormant neuron ratio \citep{sokar2023dormant} and minimum weight norm. As a strong alternative, we use linear probing \citep{alain2017understanding}: for each checkpoint, we freeze all layers except the final prediction head, randomly reinitialize the head, and train it for one epoch on labeled data from the target domain; we then select the checkpoint with lowest probe loss. Linear probing requires downstream labels and a small training run conducted at each checkpoint. We also compare against training from scratch (no pretraining).

\textbf{Protocol.} We pretrain for 12 epochs with learning rate $2 \times 10^{-3}$ with cosine annealing and perform checkpoint selection according to each strategy described above. We estimate local redundancy by computing the mean squared gradient norm on synthetic data generated using the protocol in Section~\ref{sec:practical}. We compute this metric four times per epoch during pretraining and average across measurements. For each chosen checkpoint, we fine-tune for 2 epochs or until finetuning validation loss saturates, with learning rate $2 \times 10^{-4}$. We repeat the entire pipeline across 120 random seeds and report the mean and standard error of the final fine-tuning validation loss.

\textbf{Results.} Table~\ref{tab:timeseries} shows fine-tuning performance for each checkpoint selection strategy, averaged across 120 total seeds. Selecting the checkpoint with maximum local redundancy achieves better downstream performance than selecting by lowest validation loss: once validation loss plateaus, it no longer meaningfully discriminates between checkpoints---they all fit the pretraining distribution adequately. However, local redundancy continues to vary, revealing differences in remaining adaptability. Figure~\ref{fig:finetune-comparison} shows that checkpoints with higher local redundancy adapt faster during fine-tuning, despite starting from a farther initial distribution. Additionally, local redundancy is the only plasticity metric with this property, outperforming dormant neuron ratio and weight norm. Linear probing outperforms local redundancy but requires downstream labels and a small training run conducted at each checkpoint, making it inefficient and inadmissible when the target domain is unknown at selection time.

\subsection{Memorization Capacity}
\label{sec:memorization-capacity}

Theorem~\ref{thm:gradient} provides a lower bound on local redundancy. For this bound to yield a useful plasticity estimate, the gradient norm must behave smoothly over training. To verify this on our designed synthetic datasets, we train to convergence on synthetic data and inspect the bits per parameter memorized.

\textbf{Setup.} We generate an image classification and a time series regression synthetic dataset as described in Section~\ref{sec:practical}. For classification, bits memorized follows from cross-entropy loss as given by Theorem~\ref{thm:memorization}; for regression, from rate-distortion theory (Corollary~\ref{cor:regression-gradient}). We train in bf16 precision and report bits per parameter: total bits memorized divided by the number of parameters.

\begin{figure}[t]
\centering
\includegraphics[width=\columnwidth]{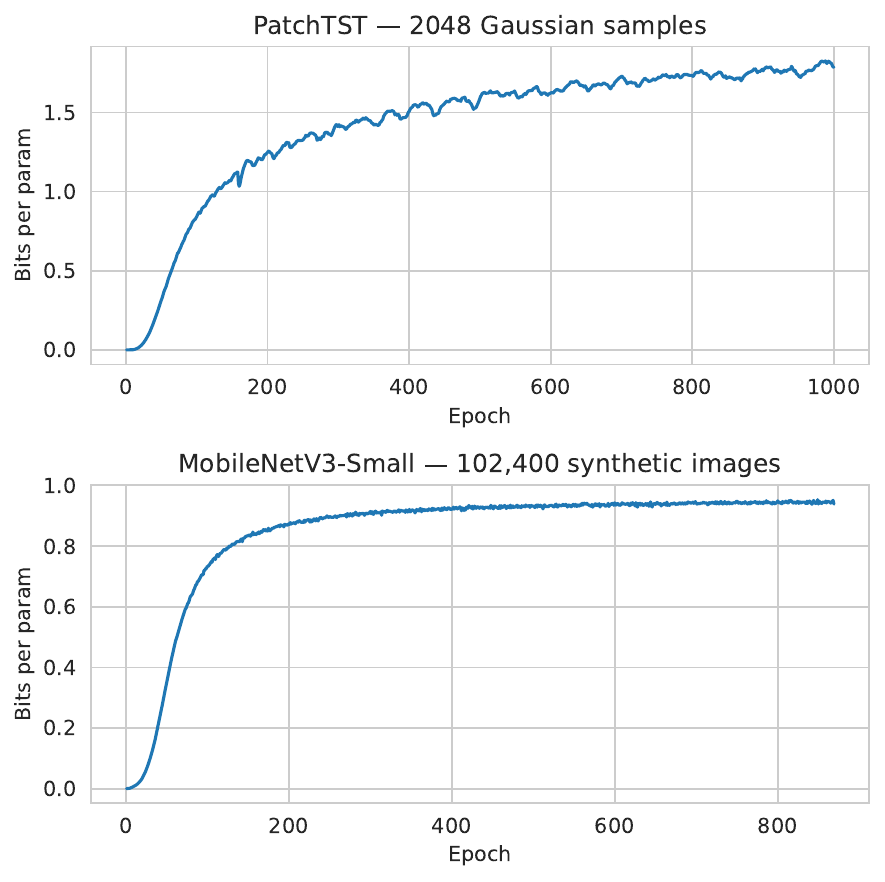}
\caption{Bits per parameter over training on synthetic memorization data. The time series model (top) achieves 1.7 bits per parameter and the vision model (bottom) achieves 1.0. The stability in the early epochs of training confirms that our gradient-based estimator provides a stable estimate for local redundancy.}
\label{fig:memorization}
\end{figure}

\textbf{Results.} Figure~\ref{fig:memorization} shows bits per parameter over training for each experiment. The time series model saturates at 1.7 bits per parameter and the vision model saturates at 1.0 bits per parameter. Prior works on memorization capacity report similar rates, such as 2.0 bits per parameter for large GPT-style transformers \citet{allen2024physics}. The ability of neural networks to utilize free parameters to memorize patterns is precisely why synthetic memorization can provide non-vacuous lower bounds on local redundancy. Additionally, the increase in bits per parameter increases smoothly during the early stages of training, indicating that the gradient norm in Section~\ref{sec:practical} is indeed a stable relative estimate of local redundancy when applied to a model that has not seen the synthetic data during training.
\section{Conclusion}
\label{sec:conclusion}

We introduced local redundancy, an information-theoretic measure of plasticity grounded in universal compression theory. We define local redundancy as the worst-case redundancy of a local model family---the set of parameters reachable by a single gradient step---to obtain a principled quantity with the correct geometric meaning: the information radius of locally achievable distributions. Theorem~\ref{thm:gradient} lower bounds local redundancy with the gradient norms on any inputs with properly randomized targets, requiring only a single backward pass and no access to downstream tasks, and Section~\ref{sec:memorization-capacity} verifies that the synthetic gradient norm provides a stable estimate of local redundancy.

Our experiments validate the framework in two settings. In continual learning, local redundancy tracks plasticity loss and predicts future task accuracy better than existing metrics. In transfer learning, local redundancy identifies high-quality checkpoints when validation loss has plateaued.

\paragraph{Limitations.} Our results establish that local redundancy \emph{predicts} downstream performance, but the lower bound does not establish that it \emph{causes} it; a controlled study that intervenes on plasticity directly---for example via replay, EWC, or shrink-and-perturb---is needed to test causation. Local redundancy is also a deliberately one-step, local quantity: it characterizes the family reachable by a single gradient step and is not a global account of plasticity loss over long optimization horizons. Our experiments measure forward adaptability rather than the full stability--plasticity tradeoff, and are limited to models of up to a few million parameters. Finally, because local redundancy is intractable, we use the gradient-norm lower bound as a proxy for it; this proxy is not calibrated to the true value and could mislead where the bound is loose, for instance if the gap between the bound and true redundancy varies across the checkpoints being compared. Consistent with this, our synthetic data saturates only approximately 1--2 bits per parameter (Section~\ref{sec:memorization-capacity}), short of the theoretical ceiling of 16 bits per parameter.

\paragraph{Future work.} Promising directions include validating scaling behavior on larger models, constructing better synthetic memorization datasets to saturate model capacity further, and developing interventions that preserve local redundancy during training.

%%%%%%%%%%%%%%%%%%%%%%%%%%%%%%%%
% FIGURES, TABLES, ALGORITHMS
%%%%%%%%%%%%%%%%%%%%%%%%%%%%%%%%
% Add your figures, tables, and algorithms here or in section files

% Acknowledgements should only appear in the accepted version.
% \section*{Acknowledgements}
% We thank [names] for helpful discussions and feedback. This work was supported by [funding sources].

\section*{Impact Statement}

This paper presents work whose goal is to advance the field of machine learning. There are many potential societal consequences of our work, none of which we feel must be specifically highlighted here.

\bibliography{references}
\bibliographystyle{icml2026}

%%%%%%%%%%%%%%%%%%%%%%%%%%%%%%%%
% APPENDIX
%%%%%%%%%%%%%%%%%%%%%%%%%%%%%%%%
\newpage
\appendix
\onecolumn
% \onecolumn makes it one-column; remove this line for two-column appendix

\section{Implementation Details}
\label{app:implementation}

\subsection{Continual Learning}
\label{app:continual-learning}

\paragraph{Architecture.} We use MobileNetV3-Large \citep{howard2019searching}, a convolutional network with 5.47M parameters. We replace the final classification head with a two-class output layer for binary classification. The model is trained in bfloat16 precision.

\paragraph{Dataset.} We evaluate on Continual ImageNet \citep{dohare2024loss}, constructing 3000 binary classification tasks by sampling pairs of ImageNet classes uniformly without replacement. Images are resized to 224×224 and normalized using ImageNet statistics.

\paragraph{Training.} For each task, we train for 100 epochs using AdamW with learning rate $10^{-3}$. Training is conducted on 4 H200 GPUs with an effective batch size of 1024 ($256 \times 4$). After training on each task, we evaluate on held-out examples from the same class pair, then proceed to the next task with weights carried over. No replay or regularization is used.

\paragraph{Local redundancy estimation.} At each task boundary, we estimate local redundancy via Theorem 3.4. We generate $n = 5000$ synthetic images, each consisting of 5 rectangles and 5 circles drawn in random order with random sizes, positions, and RGB colors on a random RGB background. For each image $x$, we sample the label from the model's predictive distribution $y \sim P_{\theta_0}(y|x)$. We compute the mean squared gradient norm of the cross-entropy loss with respect to all parameters, averaged over the synthetic batch.

\paragraph{Baseline metrics.} We compute the following plasticity metrics at each task boundary:
\begin{enumerate}[label=(\roman*),nosep,leftmargin=*]
    \item \emph{Weight norm}: total $L_2$ norm of all parameters.
    \item \emph{Distance from initialization}: $L_2$ distance $\|\theta - \theta_{\text{init}}\|$.
    \item \emph{Dormant neuron ratio}: fraction of ReLU neurons with zero activation on a batch of real training data.
    \item \emph{Training gradient norm}: mean squared gradient norm on real task data (rather than synthetic).
\end{enumerate}

\subsection{Time Series Transfer Learning}

\paragraph{Architecture.} We use PatchTST \citep{nie2023patchtst}, a Transformer-based model for multivariate time series forecasting. The model segments each input channel into patches of length 16 with stride 8, then processes them through 4 Transformer encoder layers with $d_{\text{model}} = 256$, 8 attention heads, FFN dimension 512, and dropout 0.1. The context length is 512 timesteps and the prediction horizon is 96 timesteps across all 7 input channels. The model has approximately 2.26M parameters and is trained in bfloat16 precision.

\paragraph{Datasets.} We pretrain on ETTm1 and fine-tune on ETTh2 \citep{zhou2021informer}. Both datasets contain 7 features recorded from electrical transformers. Each dataset is split 80/20 into train and validation sets. We construct sliding window samples of length $512 + 96$ and normalize each feature to zero mean and unit variance using training set statistics.

\paragraph{Pretraining.} We train with AdamW for 12 epochs using learning rate $2 \times 10^{-3}$ with cosine annealing. Training uses DataParallel across 4 H200 GPUs with an effective batch size of 1024 ($256 \times 4$). We save checkpoints after every epoch and compute local redundancy at four evenly spaced points within each epoch. Local redundancy is estimated as the mean squared gradient norm on a synthetic dataset of 1024 random Gaussian inputs $x \sim \mathcal{N}(0, I)$ with targets sampled from the model's predictive distribution $y \sim \mathcal{N}(f_{\theta_0}(x), \sigma^2 I)$.

\paragraph{Checkpoint selection.} We evaluate several strategies for selecting which pretrained checkpoint to fine-tune:
\begin{enumerate}[label=(\roman*),nosep,leftmargin=*]
    \item \emph{Lowest validation loss}: select the checkpoint with lowest pretraining validation MSE.
    \item \emph{Maximum local redundancy}: select the checkpoint with highest local redundancy.
    \item \emph{Linear probe}: freeze the encoder and train only the prediction head for one epoch on target domain data; select the checkpoint with lowest probing loss.
    \item \emph{Minimum weight norm}: select the checkpoint with smallest total $L_2$ parameter norm.
    \item \emph{Minimum dormant ratio}: select the checkpoint with fewest dormant FFN neurons, where a neuron is dormant if its mean activation on pretraining data falls below 0.05.
\end{enumerate}

\paragraph{Fine-tuning.} From each selected checkpoint, we fine-tune on ETTh2 for 2 epochs with learning rate $2 \times 10^{-4}$, using the same DataParallel configuration. We perform 6 fine-tuning runs per checkpoint selection strategy per pretrain run and report the mean final validation MSE. Results are aggregated across 120 pretrain runs.

\subsection{Memorization Capacity}
\label{app:memorization}

\paragraph{Time series experiment.} We train PatchTST (2.26M parameters, bfloat16) to memorize 2048 samples of i.i.d.\ Gaussian noise. Each sample consists of 512 input timesteps and 96 target timesteps across 7 channels, yielding 1,376,256 total output values to memorize. We train for 1000 epochs using AdamW with learning rate $10^{-3}$. We conduct training on 4 H200 GPUs with an effective batch size of 1024 ($256 \times 4$).

\paragraph{Vision experiment.} We train MobileNetV3-Large (5.47M parameters, bfloat16) to memorize 819,200 synthetic images with random class labels from 1000 classes. Each image consists of 5 rectangles and 5 circles with random colors, sizes, and positions drawn in random order on a random RGB background. We train for 1000 epochs using AdamW with learning rate $10^{-3}$. We conduct training on 4 H200 GPUs with an effective batch size of 1024 ($256 \times 4$).

\subsection{Synthetic Data Ablation}
\label{app:ablation}

By Theorem~\ref{thm:memorization}, any inputs with properly randomized targets yield a valid lower bound on local redundancy; the choice of synthetic inputs affects only the \emph{tightness} of that bound. To assess sensitivity to this choice, we compare three input types for the vision estimator: (i) independent per-pixel Gaussian noise, (ii) our synthetic shapes (Section~\ref{sec:practical}), and (iii) real, unseen images from the next task. The real-image variant serves as a reference but is inadmissible in general, since it requires access to the future task; we therefore measure how closely the Gaussian-noise and shape-based estimates track it across training. Table~\ref{tab:ablation} reports the agreement between each synthetic estimate and the real-image reference, and Figure~\ref{fig:ablation} plots the three estimates over the task sequence. The shape-based inputs track the reference closely, whereas per-pixel Gaussian noise produces smaller gradients and a noisier, erratic estimate. All variants remain valid lower bounds; the shape construction simply yields a stronger and smoother one. For the time series estimator, we found that Gaussian inputs already track the reference well and adopt them throughout.

\begin{table}[h]
\centering
\caption{Agreement between the vision local-redundancy estimate computed on synthetic inputs and the estimate computed on real, unseen next-task images (reference), measured across training. The shape-based construction tracks the reference far more closely than per-pixel Gaussian noise.}
\label{tab:ablation}
\vskip 0.1in
\begin{small}
\begin{tabular}{lccc}
\toprule
Synthetic input & Pearson $r$ & $R^2$ & MSE \\
\midrule
Shapes & $0.964$ & $0.929$ & $0.067$ \\
Gaussian noise & $0.847$ & $0.718$ & $2.474$ \\
\bottomrule
\end{tabular}
\end{small}
\end{table}

\begin{figure}[h]
\centering
\includegraphics[width=0.6\textwidth]{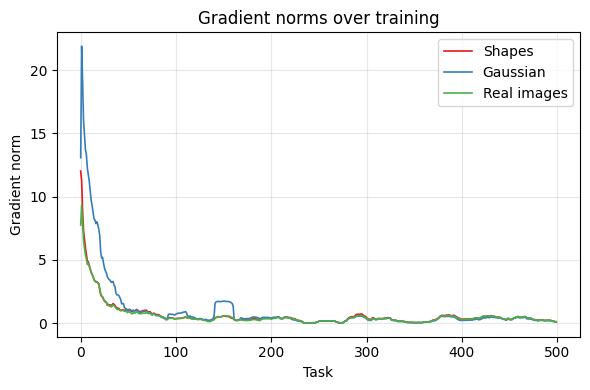}
\caption{Local-redundancy estimate (mean squared gradient norm) over the Continual ImageNet task sequence, computed on three input types: synthetic shapes, per-pixel Gaussian noise, and real, unseen next-task images (reference). The shape-based estimate tracks the real-image reference closely, whereas per-pixel Gaussian noise is noisier and erratic (e.g., the spike near task 150).}
\label{fig:ablation}
\end{figure}

\subsection{Computational Cost}
\label{app:compute}

Estimating local redundancy requires a single backward pass on the synthetic batch, with no auxiliary optimization or downstream supervision. Table~\ref{tab:compute} reports wall-clock cost on a 500-task Continual ImageNet run (one H100, one measurement per task): it adds only $1.65\%$ of total training time, more than the simplest metrics but far below the dormant neuron ratio.

\begin{table}[h]
\centering
\caption{Wall-clock cost of computing each plasticity metric on a 500-task Continual ImageNet run (one H100, one measurement per task). Training takes 81 minutes; local redundancy adds $1.65\%$ of total training time.}
\label{tab:compute}
\vskip 0.1in
\begin{small}
\setlength{\tabcolsep}{4pt}
\begin{tabular}{lcc}
\toprule
Plasticity Metric & Total (s) & Per-task (s) \\
\midrule
\textbf{Local redundancy} & $80.3$ & $0.161$ \\
Distance from init & $3.0$ & $0.006$ \\
Weight norm & $1.5$ & $0.003$ \\
Training grad.\ norm & $1.2$ & $0.003$ \\
Dormant neuron ratio & $290.1$ & $0.580$ \\
\bottomrule
\end{tabular}
\end{small}
\end{table}

\end{document}